\newtheorem{theorem}{Theorem}
\newtheorem{proposition}{Proposition}
\theoremstyle{remark}
\newtheorem{rem}{Remark}[section]
\begin{document}
\twocolumn[
\aistatstitle{A discrete version of CMA-ES}
\aistatsauthor{Eric Benhamou$^{\dagger}$, $^{\ddagger}$, Jamal Atif$^{\ddagger}$, Rida Laraki$^{\ddagger}$}
\aistatsaddress{eric.benhamou@aisquareconnect.com or eric.benhamou@dauphine.eu, \\ jamal.atif@dauphine.fr, rida.laraki@dauphine.fr} 
\aistatsaddress{A.I. Square Connect$^{\dagger}$ \hspace{1cm} Lamsade, Universite Paris Dauphine, PSL$^{\ddagger}$} 
]

\begin{abstract}
Modern machine learning uses more and more advanced optimization techniques to find optimal hyper parameters. Whenever the objective function is non-convex, non continuous and with potentially multiple local minima, standard gradient descent optimization methods fail. A last resource and very different method is to assume that the optimum(s), not necessarily unique, is/are distributed according to a distribution and iteratively to adapt the distribution according to tested points. These strategies originated in the early 1960s, named Evolution Strategy (ES) have culminated with the CMA-ES (Covariance Matrix Adaptation) ES. It relies on a multi variate normal distribution and is supposed to be state of the art for general optimization program. However, it is far from being optimal for discrete variables.
In this paper, we extend the method to multivariate binomial correlated distributions. For such a distribution, we show that it shares similar features to the multi variate normal: independence and correlation is equivalent and correlation is efficiently modeled by interaction between different variables. We discuss this distribution in the framework of the exponential family. We prove that the model can estimate not only pairwise interactions among the two variables but also is capable of modeling higher order interactions. 
This allows creating a version of CMA ES that can accomodate efficiently discrete variables. We provide the corresponding algorithm and conclude.
%Using information geometry optimization, we prove that the method performs a natural gradient descent in the Fisher information Riemannian manifold. 
%Finally, we validate our theoretical findings with experiments on a discrete optimization problem.  
\end{abstract}

\section{Introduction}
When facing an optimization problem where there is no access to the objective function's gradient or the objective function's gradient is not very smooth, the state of the art techniques rely on stochastic and derivative free algorithm that change radically the point of view of the optimization program. Instead of a deterministic gradient descent, we take a Bayesian point of view and assumes that the optimum is distributed according to a prior statistical distribution and uses particles or random draws to gradually update our statistical distribution. 
Among these method, the covariance matrix adaptation evolution strategy (CMA-ES; e.g., \cite{Hansen_2001, Hansen_2003}) has emerged as the leading stochastic and derivative-free algorithm for solving continuous optimization problems, i.e., for finding the optimum denoted by $\mathbf{x}^*$ of a real-valued objective function $f$, defined on a subset of a multi dimensional space of dimension $d$: $\mathbb{R}^d$.This method generates candidate points $\{\mathbf{x}_{i}\}$, $i \in \{1, 2, \dots, \lambda\}$, from a multivariate Gaussian distribution. It evaluates their objective function (also called fitness) values $\{f(\mathbf{x}_{i})\}$. As the distribution is characterized by its two first moments, it updates the mean vector and covariance matrix by using the sampled points and their fitness values, $\{(\mathbf{x}_{i}, f(\mathbf{x}_{i}))\}$. The algorithm keeps repeating the sampling-evaluation-update procedure (which can be seen like an exploration exploitation method until the distribution contracts to a single point or reaches the maximum of iterations. Convergence is measured either by a very small covariance matrix. The different variations around the original method  to improve the convergence investigate various heuristic method to update the distribution parameters. This strongly determines the behavior and efficiency of the whole algorithm. The theoretical foundation of the CMA-ES are that for continuous variables with given first two moments, the maximum entropy distribution is the normal distribution and that the update is based on a maximum-likelihood estimation, making this method based on a statistical principle.

A natural question is to adapt this method for discrete variables. Surprisingly, this has not been done before as the Gaussian distribution is a continuous time distribution inappropriate to discrete variables. One needs to change the underlying distirbution and also find the way to correlate the marginal distribution which can be tricky. However, we show in this paper that multivariate binomials are the natural discrete counterpart of Gaussian distributions. Hence we are able to change CMA Es to accommodate for discrete variables. This is the subject of this paper. In the section \ref{prim:sec}, we introduce the multivariate binomial distribution. Presenting this distribution in the general setting of exponential family, we can easily derive various properties and connect this distribution to maximum entropy. We also proved that for the assumed correlation structure, independence and correlation are equivalent, which is a also a feature of Gaussian distributions. In section \ref{sec:algorithm}, we present the algorithm. 
%We prove in section \ref{sec:properties} that this method performs a natural gradient ascent thanks to information geometry optimization. 
%We validate our theoretical findings on a discrete optimization problem in section \ref{sec:experiments} and finally conclude.

%%%%%%%%%%%%%%
\section{Primer on Multivariate Binomials}\label{prim:sec}
\subsection{Intuition}
To start building some intuition on multivariate binomials, we start by the simplest case, that is a two dimensional Bernoulli. It is the extension to two dimensions of the univariate Bernoulli distribution. A Bernoulli random variable $X$, is a discrete variable that takes the value $1$ with probability $p$ and $0$ otherwise. The usual notation for the probability mass function is

$$
\mathbb{P}(X=x) = p^x (1-p)^{1-x} , x \in\{0,1\}
$$

A natural extension is to consider the random vector $X = (X_1, X_2)$. It takes values in the
Cartesian product space $\{0, 1\}^2 = \{0, 1\} \times\{0, 1\}$. If we denote the joint probabilities
$p_{ij} = P(X_1 = i, X_2 = j)$ for $i,j \in \{0, 1\}$, then the probability for the bivariate Bernoulli writes:
\begin{eqnarray}\label{bivariate}
\mathbb{P}(X \! = \! x)\! & \hspace{-0.3cm} =& \hspace{-0.3cm}  \mathbb{P}(X_1=x_1, X_2=x_2) \nonumber \\
\! &\hspace{-0.2cm}  =& \hspace{-0.3cm}  p_{11}^{x_1x_2}p_{10}^{x_1(1-x_2)}p_{01}^{(1-x_1)x_2}p_{00}^{(1-x_1)(1-x_2)} 
\end{eqnarray}
with the side conditions that the joint probabilities are between $0$ and $1$: for $i,j \in \{0, 1\}$, $0 \leq p_{ij} \leq 1$ 
and they sum to one $p_{00} + p_{10} + p_{01} + p_{11} = 1$

It is however better to write the joint distribution in terms of canonical parameters of the related exponential family. 
Hence, if we define
\begin{eqnarray}
\theta_1  &=& \log \biggl(\frac{p_{10}}{p_{00}} \biggr), \label{theta1} \\
\theta_2 &=& \log \biggl(\frac{p_{01}}{p_{00}} \biggr), \label{theta2} \\
\theta_{12} &=& \log \biggl(\frac{p_{11}p_{00}}{p_{10}p_{01}} \biggr), \label{theta3} 
\end{eqnarray}

and $T(x)$ the vector of sufficient statistics denoted by
\begin{eqnarray}
T(X) = (X_1, X_2, X_1 X_2)^T
\end{eqnarray}
we can rewrite the distribution as an exponential family distribution as follows:
\begin{eqnarray} \label{bivariate_exponential_formulation}
\mathbb{P}(X = x) &=& \exp( \langle T(X) , \theta \rangle - A(\theta) ) 
\end{eqnarray}
where the log partition function $A(\theta)$  is defined such as the probability normalizes to one. It is very easy to check that $A(\theta)= -\log p_{00}$. We can also relate the initial moment parameters $\{p_{ij}\}$ for for $i,j \in \{0, 1\}$, to the canonical parameters as follows

\begin{proposition} \label{prop1}
The moment parameters can be expressed in terms of the canonical parameters as follows;
\begin{eqnarray}
p_{00} = \frac{1}{1 + \exp(\theta_1) + \exp(\theta_2) + \exp(\theta_1 + \theta_2 +
\theta_{12})}, \\
p_{10} = \frac{\exp(\theta_1)}{1 + \exp(\theta_1) + \exp(\theta_2) + \exp(\theta_1 + \theta_2 +
\theta_{12})}, \\
p_{01} = \frac{\exp(\theta_2)}{1 + \exp(\theta_1) + \exp(\theta_2) + \exp(\theta_1 + \theta_2 +
\theta_{12})}, \\
p_{11} = \frac{\exp(\theta_1 + \theta_2 + \theta_{12})}{1 + \exp(\theta_1) + \exp(\theta_2) +
\exp(\theta_1 + \theta_2 + \theta_{12})}.
\end{eqnarray}
\end{proposition}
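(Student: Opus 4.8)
The plan is to invert the defining relations \eqref{theta1}--\eqref{theta3} together with the normalization constraint $p_{00}+p_{10}+p_{01}+p_{11}=1$. Exponentiating \eqref{theta1} and \eqref{theta2} immediately gives $p_{10}=p_{00}\exp(\theta_1)$ and $p_{01}=p_{00}\exp(\theta_2)$. Exponentiating \eqref{theta3} yields $p_{11}p_{00}=p_{10}p_{01}\exp(\theta_{12})$, and substituting the previous two identities turns this into $p_{11}=p_{00}\exp(\theta_1+\theta_2+\theta_{12})$. Thus all four moment parameters are written as $p_{00}$ times an explicit positive exponential factor.

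Next I would impose $p_{00}+p_{10}+p_{01}+p_{11}=1$. Factoring out $p_{00}$ gives $p_{00}\bigl(1+\exp(\theta_1)+\exp(\theta_2)+\exp(\theta_1+\theta_2+\theta_{12})\bigr)=1$, which is exactly the claimed formula for $p_{00}$; plugging this back into the three identities from the first step produces the stated expressions for $p_{10}$, $p_{01}$ and $p_{11}$.

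Equivalently, one can read the result directly off the exponential-family form \eqref{bivariate_exponential_formulation}: evaluating at $x=(0,0)$ gives $p_{00}=\exp(-A(\theta))$, and at the other three atoms $p_{x}=\exp(\langle T(x),\theta\rangle)\,p_{00}$, which reproduces the same four relations; then $\sum_{x}\mathbb{P}(X=x)=1$ identifies $\exp(A(\theta))=1+\exp(\theta_1)+\exp(\theta_2)+\exp(\theta_1+\theta_2+\theta_{12})$, i.e. $A(\theta)=-\log p_{00}$, which also confirms the remark made just before the proposition.

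There is no genuine obstacle here: the proof is a routine inversion of a log-linear reparametrization. The only points worth a word of care are that the denominator is strictly positive, so the expressions are well defined (immediate, since each exponential is positive), and that the resulting tuple $(p_{ij})$ lies in the open probability simplex, which is automatic from the formulas.
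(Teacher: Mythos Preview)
Your proof is correct and follows exactly the approach the paper has in mind: the paper's own proof simply says that the moment parameters are ``trivially'' inferred from \eqref{theta1}--\eqref{theta3}, which is precisely the inversion you carry out (with the normalization constraint making the system determined). Your alternative reading via the exponential-family form \eqref{bivariate_exponential_formulation} is also in line with the paper's remark that $A(\theta)=-\log p_{00}$.
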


\begin{proof} 
See \ref{proof1}
\end{proof}

The expression of the distribution in terms of the canonical parameters is particularly useful as it indicates immediately that independence and correlation are equivalent 
as in a Gaussian distribution. We will see that this result generalizes to multivariate binomial in the next subsection \ref{multivariate_binomial}.

\begin{proposition}\label{prop2}
The components of the bivariate Bernoulli random vector $(X_1, X_2)$ are independent if and only if $\theta_{12}$ is zero. 
Like for a normal distribution, independence and correlation are equivalent.
\end{proposition}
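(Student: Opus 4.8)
The plan is to read everything off the canonical-parameter form \eqref{bivariate_exponential_formulation} together with Proposition \ref{prop1}, and to establish a chain of equivalences $\theta_{12}=0 \iff X_1 \text{ and } X_2 \text{ independent} \iff \mathrm{Cov}(X_1,X_2)=0$. Throughout I would use that writing down $\theta_1,\theta_2,\theta_{12}$ via \eqref{theta1}--\eqref{theta3} already presupposes all four joint probabilities $p_{ij}$ are strictly positive, so no degenerate case needs separate treatment; a single sentence at the start will record this.

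For the implication $\theta_{12}=0 \Rightarrow$ independence, which is the one with content, I would argue as follows. When $\theta_{12}=0$ the inner product $\langle T(X),\theta\rangle$ reduces to $\theta_1 x_1 + \theta_2 x_2$, with no cross term, so $\mathbb{P}(X=x)$ will factor as a function of $x_1$ alone times a function of $x_2$ alone provided the log-partition function splits additively. This is exactly what Proposition \ref{prop1} gives: the common denominator $1+e^{\theta_1}+e^{\theta_2}+e^{\theta_1+\theta_2+\theta_{12}}$ collapses to $(1+e^{\theta_1})(1+e^{\theta_2})$, hence $p_{ij} = \frac{e^{i\theta_1}}{1+e^{\theta_1}}\cdot\frac{e^{j\theta_2}}{1+e^{\theta_2}}$ for all $i,j\in\{0,1\}$. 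Summing out one coordinate identifies the marginals $\mathbb{P}(X_1=i)=\frac{e^{i\theta_1}}{1+e^{\theta_1}}$ and $\mathbb{P}(X_2=j)=\frac{e^{j\theta_2}}{1+e^{\theta_2}}$, so $p_{ij}=\mathbb{P}(X_1=i)\mathbb{P}(X_2=j)$, i.e.\ the two components are independent (and each marginally Bernoulli with the logistic parameter).

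For the converse, independence forces $p_{ij}=\mathbb{P}(X_1=i)\mathbb{P}(X_2=j)$, so $p_{11}p_{00}$ and $p_{10}p_{01}$ both equal $\mathbb{P}(X_1=1)\mathbb{P}(X_1=0)\mathbb{P}(X_2=1)\mathbb{P}(X_2=0)$, the argument of the logarithm in \eqref{theta3} is $1$, and $\theta_{12}=0$. Finally, for the correlation statement I would expand $\mathrm{Cov}(X_1,X_2)=\mathbb{E}[X_1X_2]-\mathbb{E}[X_1]\mathbb{E}[X_2]=p_{11}-(p_{10}+p_{11})(p_{01}+p_{11})$ and, substituting $p_{00}=1-p_{10}-p_{01}-p_{11}$, simplify to the compact form $\mathrm{Cov}(X_1,X_2)=p_{11}p_{00}-p_{10}p_{01}=p_{10}p_{01}\bigl(e^{\theta_{12}}-1\bigr)$, which vanishes precisely when $\theta_{12}=0$. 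Chaining the three facts closes the proof. I do not anticipate a real obstacle; the only delicate point is the positivity caveat, and the only mildly fiddly step is the algebraic simplification of the covariance, which is short once $p_{00}$ is eliminated.
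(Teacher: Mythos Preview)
Your proposal is correct and follows essentially the same route as the paper: the independence $\Leftrightarrow\theta_{12}=0$ equivalence via separability of the exponential-family density, and the covariance computation reducing to a positive factor times $(e^{\theta_{12}}-1)$. The only cosmetic difference is that the paper carries out the covariance calculation entirely in canonical parameters (writing the joint probabilities as $K\,e^{\cdots}$ with $K=p_{00}$) rather than first simplifying to $p_{11}p_{00}-p_{10}p_{01}$ in moment parameters as you do; your version is arguably cleaner, and your explicit treatment of the factorization via Proposition~\ref{prop1} spells out what the paper leaves as a one-line ``separability'' remark.
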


\begin{proof} 
See \ref{proof2}
\end{proof}

The equivalence between correlation and independence was already presented in 
\cite{Whittaker_1990} where it was referred to as Proposition~2.4.1. The
importance of $\theta_{12}$  referred to as the cross term or \textit{u-terms}) is discussed and
called \textit{cross-product ratio} between $X_1$ and $X_2$. In \cite{McCullagh_1989}  and \cite{Ma_2010}, this cross product ratio
is also identified but called the log odds.

Intuitively, there are similarities between Bernoulli (their sum version that is the Binomial) and the Gaussian. And like for the multivariate Gaussian, we can prove that the  marginal and the conditional Bernoulli are still binomial  as shown by the proposition \ref{prop3}, making the analogy between Bernoulli (and soon their independent sum version which is the Binomial) and Gaussian even more striking!

\begin{proposition}\label{prop3}
In the bivariate Bernoulli vector whose probability mass function is given by \ref{bivariate} 
\begin{itemize}
\item the marginal distribution of $X_1$  is also a univariate Bernoulli whose probability mass function is 
\begin{equation}
\label{marginalpdf} 
\mathbb{P}(X_1 = x_1) = (p_{10} + p_{11})^{x_1}(p_{00} +p_{01})^{(1-x_1)}.
\end{equation}
\item the conditional distribution of $X_1$ given $X_2$ is also a univariate Bernoulli whose probability mass function is 
\begin{eqnarray}
& & \hspace{-1.6cm} \mathbb{P}(X_1  =  x_1 | X_2 = x_2) =  \biggl(\frac{p_{1 x_2}}{p_{1 x_2} + p_{0 x_2}} \biggr)^{x_1}    \nonumber \\
& & \hspace{-0.5cm} \biggl(\frac{p_{0, x_2}}{p_{1, x_2} + p_{0, x_2}} \biggr)^{1-x_1} \label{conditionalpdf} 
\end{eqnarray}
\end{itemize}
\end{proposition}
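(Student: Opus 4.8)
The plan is to proceed directly from the definition of the joint mass function in~(\ref{bivariate}) by brute-force marginalization and then by the definition of conditional probability; no clever device is needed, only careful handling of the indicator exponents.

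First I would record the elementary observation that the product form in~(\ref{bivariate}) is simply a compact way of writing $\mathbb{P}(X_1 = x_1, X_2 = x_2) = p_{x_1 x_2}$: for each of the four points $(x_1,x_2) \in \{0,1\}^2$ exactly one of the four exponents $x_1 x_2$, $x_1(1-x_2)$, $(1-x_1)x_2$, $(1-x_1)(1-x_2)$ equals $1$ and the other three equal $0$, so the product collapses to the single matching $p_{ij}$. With this in hand the marginal of $X_1$ is immediate: $\mathbb{P}(X_1 = x_1) = \sum_{x_2 \in \{0,1\}} p_{x_1 x_2} = p_{x_1 0} + p_{x_1 1}$, which equals $p_{00}+p_{01}$ when $x_1 = 0$ and $p_{10}+p_{11}$ when $x_1 = 1$. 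Recognizing that any function on $\{0,1\}$ taking the value $a$ at $0$ and $b$ at $1$ can be written $a^{1-x_1} b^{x_1}$, this is exactly~(\ref{marginalpdf}), which exhibits $X_1$ as a univariate Bernoulli with success probability $p_{10}+p_{11}$.

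Next, for the conditional law I would apply $\mathbb{P}(X_1 = x_1 \mid X_2 = x_2) = \mathbb{P}(X_1 = x_1, X_2 = x_2)/\mathbb{P}(X_2 = x_2)$. The numerator is $p_{x_1 x_2}$ by the first step, and the denominator, obtained by the same marginalization argument with the roles of the two coordinates interchanged, is $\mathbb{P}(X_2 = x_2) = p_{1 x_2} + p_{0 x_2}$. Hence the conditional mass at $x_1$ equals $p_{1 x_2}/(p_{1 x_2}+p_{0 x_2})$ when $x_1 = 1$ and $p_{0 x_2}/(p_{1 x_2}+p_{0 x_2})$ when $x_1 = 0$; rewriting these two values in the $a^{1-x_1}b^{x_1}$ form yields precisely~(\ref{conditionalpdf}) and displays the conditional as a univariate Bernoulli with parameter $p_{1 x_2}/(p_{1 x_2}+p_{0 x_2})$.

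There is essentially no hard step here; the only points requiring a little care are the exponent bookkeeping in the collapse of~(\ref{bivariate}) and the observation that, for the conditional statement to be meaningful, one needs $p_{1x_2}+p_{0x_2} > 0$, which is exactly the requirement that $x_2$ lie in the support of $X_2$. An alternative, marginally slicker route would be to start from the exponential-family form~(\ref{bivariate_exponential_formulation}) and marginalize there, but since the sample space has only four points the direct computation is shorter and more transparent, so I would present that one.
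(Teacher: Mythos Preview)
Your proof is correct and follows essentially the same direct-computation route as the paper: marginalize by summing $p_{x_1 x_2}$ over $x_2$, then form the conditional as $p_{x_1 x_2}/(p_{0x_2}+p_{1x_2})$, and recast each result in the $a^{1-x_1}b^{x_1}$ Bernoulli form. Your write-up is in fact a bit more careful than the paper's (you handle both values of $x_2$ at once and note the support condition $p_{0x_2}+p_{1x_2}>0$), but there is no substantive difference in method.
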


\begin{proof} 
See \ref{proof3}
\end{proof}
Before we move on to the generalization, we need to mention a few important facts. 
First, recall that the sum of $n$ independent Bernoulli trials with parameter $p$ is a Binomial with parameter $n$ and $p$. 
And when we talk about independent sum, it should ring a bell! Independent sum should make you immediately think about the Central Limit Theorem. 
This intuition is absolutely correct and shows the connection between the binomial and Gaussian distribution. 
This is the Moivre Laplace theorem stated below

\begin{theorem}\label{theo1}
If the variable $B_n $ follows a binomial distribution with parameters $n$ and $p$ in $]0,1[$, then the variable
$ Z_n = \frac {B_n-np} {\sqrt {np (1-p)}} = \sqrt n \frac{B_n / n - p } {p (1-p)} $ converges in law to a standard normal law $\mathcal{N} (0,1)$. 
Another presentation of this result is to say that, for  $p \in ]0,1[$,
as $n$ grows large, for $k$ in the neighborhood of $n p$ we can approximate the binomial distribution by a normal as follows:
\begin{equation}
\binom{n}{k} p^{k} (1-p)^{n-k} 
\simeq \frac{1}{\sqrt{2 \pi n p (1-p)}}   
e^{-\frac{(k - n p)^2} { 2 n p (1-p)}}
\end{equation}
\end{theorem}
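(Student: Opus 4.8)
The plan is to prove the two assertions of Theorem~\ref{theo1} separately: the convergence in law by reducing to the classical Central Limit Theorem, and the pointwise (local) approximation by a direct Stirling-formula computation.

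For the convergence in law, I would write $B_n = \sum_{i=1}^n X_i$ where $X_1,\dots,X_n$ are i.i.d.\ Bernoulli variables with parameter $p$. Each $X_i$ has mean $p$ and finite variance $p(1-p)$, so the Lindeberg--L\'evy CLT applies verbatim and gives that $Z_n = (B_n - np)/\sqrt{np(1-p)}$ converges in distribution to $\mathcal{N}(0,1)$. If one prefers a self-contained argument, one computes the characteristic function $\varphi_{Z_n}(t) = \left(p\, e^{it(1-p)/\sigma_n} + (1-p)\, e^{-itp/\sigma_n}\right)^n$ with $\sigma_n = \sqrt{np(1-p)}$, takes logarithms, Taylor-expands the two exponentials to second order in $1/\sigma_n$, checks that the first-order terms cancel by the centering and that $\log\varphi_{Z_n}(t) \to -t^2/2$, and concludes by L\'evy's continuity theorem.

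For the local approximation, I would fix $k$ with $k = np + O(\sqrt n)$, set $x = (k-np)/\sqrt{np(1-p)}$ so that $x$ stays bounded, and apply Stirling's formula $m! = \sqrt{2\pi m}\,(m/e)^m e^{\rho_m}$ with $\rho_m \to 0$ to $n!$, $k!$ and $(n-k)!$ inside $\binom{n}{k} p^k (1-p)^{n-k}$. After cancelling the powers of $e$, the algebraic prefactor reduces to $\sqrt{\tfrac{n}{2\pi k(n-k)}}$, which equals $\tfrac{1}{\sqrt{2\pi np(1-p)}}\,(1+o(1))$ since $k/n \to p$ and $(n-k)/n \to 1-p$. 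For the exponential part I would collect the remaining logarithmic terms into $-k\log\tfrac{k}{np} - (n-k)\log\tfrac{n-k}{n(1-p)}$, substitute $k = np + x\sqrt{np(1-p)}$ and $n-k = n(1-p) - x\sqrt{np(1-p)}$, and expand $\log(1+u) = u - u^2/2 + O(u^3)$ with $u$ of order $n^{-1/2}$: the first-order terms cancel and the second-order terms combine to $-x^2/2 + o(1) = -\tfrac{(k-np)^2}{2np(1-p)} + o(1)$, which is the claimed equivalence.

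The main obstacle is uniformity in the local statement: one must check that the $o(1)$ error coming from the Stirling remainders $\rho_m$ and from the cubic tail of the $\log(1+u)$ expansion is uniform over all $k$ with $|k-np| \le C\sqrt n$, which is precisely the meaning of ``$k$ in the neighbourhood of $np$''. A secondary point, only needed if one wishes to re-derive the convergence in law from the local estimate, is to sum the local approximation over lattice points in a fixed interval and recognize the resulting Riemann sum (with mesh $1/\sqrt{np(1-p)}$) as converging to the Gaussian integral; since the i.i.d.\ decomposition already yields the CLT, this last step can be omitted.
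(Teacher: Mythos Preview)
Your proposal is correct and matches the paper exactly: the paper does not give a detailed proof of this theorem but simply states that ``the proof of this theorem is traditionally done with doing a Taylor expansion of the characteristic function'' and that ``an alternative proof is to use the Sterling formula as well as a Taylor expansion to relate the binomial distribution to the normal one,'' which are precisely the two arguments you outline. Your write-up is in fact considerably more detailed than what the paper provides.
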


The proof of this theorem is traditionally done with doing a Taylor expansion of the characteristic function. An alternative proof is to use the Sterling formula as well as a Taylor expansion to relate the binomial distribution to the normal one. Historically, de Moivre was the first to establish this theorem in 1733 in the particular case: $p =1 / 2 $. Laplace generalized it in 1812 for any value of $p$ between 0 and 1 and started creating the ground for the central limit theorem that extended this result far beyond. Later on, many more mathematicians generalized and extended this result like Cauchy, Bessel, Poisson but also von Mises, Pólya, Lindeberg, Lévy, Cramér as well as Chebyshev, Markov and Lyapunov.

Second, if we take an infinite sum of Bernoulli, this is a discrete distribution that is the asymptotic limit of the binomial distribution. 
This is also a distribution that is part of the exponential family and is given by the Poisson distribution.

\begin{proposition}\label{prop4}
For a large number $n$ of independent Bernoulli trials with probability $p$ such that $\lim\limits_{n\to\infty} np  = \lambda$, then the corresponding binomial distribution with parameter $n$ and $p$ converges in distribution to the Poisson distribution
\end{proposition}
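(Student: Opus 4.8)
The plan is to establish pointwise convergence of the binomial probability mass function to that of the Poisson, and then to upgrade this to convergence in distribution. Write $p_n$ for the success probability at stage $n$ and set $\lambda_n = n p_n$, so that $\lambda_n \to \lambda$ by hypothesis. I would fix an integer $k \geq 0$ and, using $p_n = \lambda_n/n$ together with $\tfrac{n!}{(n-k)!\,n^k} = \prod_{j=0}^{k-1}(1-j/n)$, write
\begin{equation*}
\binom{n}{k} p_n^k (1-p_n)^{n-k} = \Bigl( \prod_{j=0}^{k-1}\bigl(1 - \tfrac{j}{n}\bigr) \Bigr) \cdot \frac{\lambda_n^k}{k!} \cdot \Bigl(1 - \frac{\lambda_n}{n}\Bigr)^{n} \cdot \Bigl(1 - \frac{\lambda_n}{n}\Bigr)^{-k}.
\end{equation*}
The first factor tends to $1$, the last factor tends to $1$, and $\lambda_n^k \to \lambda^k$, so the whole issue reduces to the middle factor.

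For $(1 - \lambda_n/n)^n$, which is the only term needing care because $\lambda_n$ is not constant, I would take logarithms: writing $u_n = \lambda_n/n \to 0$ and using the elementary bound $|\log(1-u) + u| \le u^2$ valid for $u$ small, one gets $\bigl| n\log(1-u_n) + \lambda_n \bigr| \le n u_n^2 = \lambda_n^2/n \to 0$, hence $n\log(1-u_n) \to -\lambda$ and, by continuity of $\exp$, $(1-\lambda_n/n)^n \to e^{-\lambda}$. Combining the four factors yields
\begin{equation*}
\binom{n}{k} p_n^k (1-p_n)^{n-k} \longrightarrow e^{-\lambda}\,\frac{\lambda^k}{k!}
\qquad (n \to \infty),
\end{equation*}
which is precisely the Poisson$(\lambda)$ mass at $k$.

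To conclude, I would upgrade pointwise convergence of the mass functions to convergence in distribution. Since the state space is the discrete set $\mathbb{N}$ and the limit sequence $\bigl(e^{-\lambda}\lambda^k/k!\bigr)_{k\ge 0}$ is itself a probability mass function, Scheff\'e's lemma gives convergence in total variation, a fortiori in distribution; equivalently, on a countable space pointwise convergence of probability mass functions towards a probability mass function is already equivalent to weak convergence.

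An alternative and slightly slicker route bypasses the mass-function computation altogether by using generating functions: the probability generating function of $\mathrm{Binomial}(n,p_n)$ is $s \mapsto (1 - p_n + p_n s)^n = \bigl(1 + \tfrac{\lambda_n(s-1)}{n}\bigr)^n$, which converges for every $s \in [0,1]$ to $e^{\lambda(s-1)}$, the generating function of $\mathrm{Poisson}(\lambda)$, so the continuity theorem for probability generating functions finishes the proof. In either case the only genuinely delicate point is the limit $(1-\lambda_n/n)^n \to e^{-\lambda}$ under the mere assumption $n p_n \to \lambda$ rather than $n p_n = \lambda$; the remaining steps are routine.
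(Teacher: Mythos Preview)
Your main argument is correct and uses exactly the same four-factor decomposition of the binomial mass function as the paper, though you are more careful: you allow $\lambda_n = n p_n$ to vary, justify $(1-\lambda_n/n)^n \to e^{-\lambda}$ rigorously via a logarithm bound, and invoke Scheff\'e to upgrade pointwise convergence of the mass functions to convergence in distribution, whereas the paper simply writes the limit of each factor and stops at the pointwise statement. The probability-generating-function route you sketch at the end is a genuine alternative not present in the paper.
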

\begin{proof} 
See \ref{proof4}
\end{proof}

The two previous results show that binomials, Poisson and Gaussian distributions that are part of the exponential family are closely connected and represent the discrete and continuous version of very similar concepts, namely independent and identically distributed increments. 

\subsection{Maximum entropy}
It is also interesting to relate these distributions to maximum Shannon entropy. Let a function : $\Phi : \Xi \to \mathbb{R}^d$, where $\Xi$ is the space of the random variable $X$ and a vector $\alpha \in \mathbb{R}^d$. It is well known that the maximum entropy distribution whose constraint is given by $\mathbb{E}_P \left[ \Phi(X) \right] = \alpha$ is a distribution of the exponential family given by the following theorem

\begin{theorem}\label{theo2}
The distribution that maximizes the Shannon entropy : $- \int p(x) \log p(x) d\mu(x)$
subject to  the constraint $\mathbb{E}_P \left[ \phi(X) \right] = \alpha$ and the obvious probability constraints $ \int p(x) d\mu(x) = 1$, $p(x) \geq 0$,
is the unique distribution that is part of the exponential family and given by
\begin{equation}
p_{\theta} = \exp( < \theta, \Phi(x) > - A(\theta) ),
\end{equation}
with 
$$ 
A(\theta ) = \log \int \exp( < \theta, \Phi(x) > ) d\mu(x) 
$$
\end{theorem}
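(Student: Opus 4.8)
The plan is to handle the theorem in two stages: a variational computation that \emph{suggests} the exponential-family form, followed by a convexity argument (Gibbs' inequality for the Kullback--Leibler divergence) that rigorously \emph{certifies} optimality and uniqueness without any second-order variational analysis.

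First I would introduce a Lagrange multiplier vector $\theta \in \mathbb{R}^d$ for the moment constraint $\mathbb{E}_P[\Phi(X)] = \alpha$ together with a scalar multiplier $\mu_0$ for normalization, form the Lagrangian
$$
\mathcal{L}(p) = -\int p(x)\log p(x)\,d\mu(x) + \Bigl\langle \theta, \int \Phi(x)p(x)\,d\mu(x) - \alpha \Bigr\rangle + \mu_0\Bigl(\int p(x)\,d\mu(x) - 1\Bigr),
$$
and set the formal functional derivative with respect to $p(x)$ to zero, obtaining $-\log p(x) - 1 + \langle \theta, \Phi(x)\rangle + \mu_0 = 0$, i.e. $p(x) = \exp(\langle\theta,\Phi(x)\rangle + \mu_0 - 1)$. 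Enforcing $\int p\,d\mu = 1$ pins down $\mu_0 - 1 = -A(\theta)$ with $A(\theta) = \log\int\exp(\langle\theta,\Phi(x)\rangle)\,d\mu(x)$, which is exactly the claimed form $p_\theta(x) = \exp(\langle\theta,\Phi(x)\rangle - A(\theta))$; the constraint $p \ge 0$ holds automatically since $p_\theta$ is an exponential.

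For the rigorous part, assume there is an admissible natural parameter $\theta$ (so that $A(\theta) < \infty$) with $\mathbb{E}_{p_\theta}[\Phi(X)] = \alpha$, and let $q$ be any competing density satisfying $\int q\,d\mu = 1$ and $\mathbb{E}_q[\Phi(X)] = \alpha$. The key remark is that $\int q(x)\log p_\theta(x)\,d\mu(x) = \langle\theta,\mathbb{E}_q[\Phi(X)]\rangle - A(\theta) = \langle\theta,\alpha\rangle - A(\theta)$ is \emph{the same constant for every such $q$}, and equals $-H(p_\theta)$ upon taking $q = p_\theta$. Consequently
$$
0 \le D(q\,\|\,p_\theta) = \int q(x)\log\frac{q(x)}{p_\theta(x)}\,d\mu(x) = -H(q) - \int q(x)\log p_\theta(x)\,d\mu(x) = H(p_\theta) - H(q),
$$
so $H(q) \le H(p_\theta)$ with equality if and only if $q = p_\theta$ $\mu$-a.e., which yields both maximality and uniqueness in one stroke.

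The main obstacle is the existence statement used silently above: one must know that the prescribed moment $\alpha$ is actually attained by some admissible $\theta$, i.e. that $\alpha$ lies in the interior of the range of the gradient map $\theta \mapsto \nabla A(\theta) = \mathbb{E}_{p_\theta}[\Phi]$, equivalently in the relative interior of the convex hull of $\Phi(\Xi)$. Here I would invoke the standard structural facts about exponential families: $A$ is convex and essentially smooth on its effective domain and $\nabla A$ is a homeomorphism onto the interior of the mean-parameter set, so such a $\theta$ exists and is unique whenever $\alpha$ is interior. For the discrete, bounded cases this paper cares about (the multivariate Bernoulli of Section~\ref{prim:sec}), $\Phi$ is bounded, $A$ is finite on all of $\mathbb{R}^d$, and the claim collapses to an elementary fact about relative interiors of polytopes; the boundary case, where the supremum need not be attained inside the exponential family, I would mention only in passing.
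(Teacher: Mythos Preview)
Your proof is correct and follows essentially the same two-stage strategy as the paper: a Lagrangian/variational computation to obtain the exponential-family form, followed by the Kullback--Leibler inequality $D(q\,\|\,p_\theta)\ge 0$ to certify optimality and uniqueness (the paper credits this to Theorem~11.1.1 of Cover--Thomas). The only differences are cosmetic---the paper carries an explicit multiplier $\lambda(x)$ for the constraint $p\ge 0$ and disposes of it via complementary slackness, while you simply note positivity is automatic; conversely, you are more careful than the paper about the existence of an admissible $\theta$ realizing the prescribed moment $\alpha$.
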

\begin{proof} 
See \ref{proof5}
\end{proof}

\begin{rem}
The theorem \ref{theo2} works also for discrete distributions. It says that the discrete distribution that maximizes the Shannon entropy  $- \sum p(x) \log p(x)$ subject to the constraint
$\mathbb{E}_P \left[ \phi(X) \right] = \alpha$ and the obvious probability constraints $ \sum p(x) = 1$, $p(x) \geq 0$,
is the unique distribution that is part of the exponential family and given by
\begin{equation}
p_{\theta} = \exp( < \theta, \Phi(x) > - A(\theta) ),
\end{equation}
with 
$$ 
A(\theta ) = \log \int \sum ( < \theta, \Phi(x) > ) 
$$
\end{rem}

The theorem \ref{theo2} implies in particular that the continuous distribution that maximizes entropy with given mean and variance (or equivalently first and second moments) is an exponential family of the form 
$$
\exp( \theta_1 x + \theta_2 x^2 - A(\theta) )
$$
where the log partition function $A(\theta)$ is defined to ensure the probability distribution sums to one. This distribution is indeed a normal distribution as it is the exponential of a quadratic form. This is precisely the continuous distribution used in the CMA ES algorithm. Taking a distribution that maximizes the entropy means that we take a distribution that has the less information prior. Or said differently, this is the distribution with the minimal prior structural constraint. If nothing is known, it should therefore be preferred.

Ideally, for our CMA ES discrete adaptation, we would like to find the discrete distribution equivalent of the normal. if we want the discrete distribution with independent increment, we should turn to binomial distributions. Binomials have the other advantage to converge to the normal distribution whenever the discrete parameter converges to a continuous one. Binomials are also distributions that are part of the exponential family.  But we are facing various problems. To keep the discussion simple, let us first look at a single parameter that can take as values all the integer between $0$ to $n$

First of all, we face the issue of controlling the first two moments of our distribution or equivalent to be able to control the mean denoted by $\mu$ and the variance denoted by $\sigma^2$. Binomial distributions do not have two parameters like normals to be able to adapt to first and second moments constraints as easily as normals.  Indeed for our given parameter $n$  that is the number of discrete state of our parameter to optimize in the discrete CMA ES, we are only left with a single parameter $p$ for our binomial distribution $\mathcal{B}(n,p)$ to accommodate for the constraints. The expectation is given by $n p$ while the variance is given by $n p (1-p)$. If we would like to have a discrete distribution that progressively peaks to the minimum, we would like to be able to force the variance to converge to $0$. This will fix the variance to $\sigma^2 = np (1-p)$. We can easily solve this quadratic equation $p^2 - p + \sigma^2 / n = 0$ and use the minimal solution given by 
$$
p =  \frac{1- \sqrt{ 1 - 4 \sigma^2 / n}}{2} 
$$
provided that $\sigma^2 \leq n / 4$. As $\sigma$ will tend to zero, the parameter $p$ will tend to zero. In order to accommodate for the mean constraint, we need a work-around. We see that the discrete parameter is we do not do anything will converge to $0$ as $p$ will converge to $0$. A solution that is simple is to assume that our discrete parameter is distributed according to 
$$
(\mu + (\mathcal{B}(n,p) - n p) ) \mod n
$$
where $a \mod n$ is a modulo $n$. It is the remainder of the Euclidean division of a by $n$. This method will ensure that we sample all possible $0$ to $n$ possible value with a mean that is equal to $\mu$ and a variance controlled by the parameter $p$.

Secondly, we would like to use a discrete distribution that maximizes the entropy. This is the case for the continuous version of CMA-ES with the normal distribution. However, for discrete distribution, this maximum entropy condition is not as easy. It is well known that the maximum entropy discrete distribution with a given mean is not the binomial distribution but rather the distribution given by 
$$
p(X=i) = c \,\, \rho^i
$$
where $c = 1 / {\sum\limits_{i=0}^n \rho^i} = \frac{1-\rho}{1 - \rho^{n+1}} $ and where $\rho$ is determined such as $\sum\limits_{i=0}^n c \,\, i \,\, \rho^i = \mu$ which leads to the implicit equation for $\rho$: 
$$
(1 + \mu) \rho + (\mu- (n+1)) \rho^{n+1} + (n -\mu) \rho^{n+2}= \mu,
$$
using the well known geometric identities: $\sum\limits_{i=0}^n \rho^i =  \frac{1-\rho^{n+1}}{1 - \rho}$ and $\sum\limits_{i=0}^n i \rho^i =  \frac{  \rho  \frac{1-\rho^{n+1}}{1 - \rho} - (n+1) \rho^{n+1}} {1 - \rho}_{_{_{}}}$. The distribution is sometimes referred to as the truncated geometric distribution. This is not our desirable binomial distribution. Obviously, we can rule out this truncated geometric distribution as its probability mass function does not make sense for our parameter. The probability mass function is decreasing which is not a desirable feature. Rather, we would like a bell shape, which is the case for our binomial distribution. The tricky question is how to relate our binomial distribution to a maximum entropy principle as this is the case for the normal. 

We can first remark that the binomial distribution is not too far away from a geometric distribution when the number of trials $n$ tends to infinity at least for some terms. Indeed, the probability mass function is given by $\binom{n}{k} p^k (1-p)^{n-k}$. And using the Sterling formula, we can see that for $n$ large, we can approximate factorial $n$ as follows $n\,! \sim \sqrt{2\pi n}\,\left( \frac{n}{e}\right)^n$, which leads an asymptotic term similar to the geometric distribution. This gives some hope that there should be a way to relate our binomial distribution to a maximum entropy principle. And the trick here is to reduce the space of possible distributions. It instead of looking at the entire space of distribution, we reduce the space of possible distributions to any Poisson binomial distributions (also referred to in the statistics literature as the generalized binomial distribution), we could find a solution. The latter distribution named after the famous French mathematician \textit{Siméon Denis Poisson} is the discrete probability distribution of a sum of independent Bernoulli trials that are not necessarily identically distributed. And nicely, restricting the space of possible distribution to any Poisson binomial distributions, theorem \ref{theo3} proves that the binomial distribution is the distribution that maximizes the entropy for a given mean.

\begin{theorem}\label{theo3}
Among all Poisson binomial distributions with $n$ trials, the distribution that maximizes the Shannon entropy : $- \sum p(x) \log p(x)$
subject to  the constraint $\sum x p(x) = \mu$ and the obvious probability constraints is the binomial distribution $\mathcal{B}(n,p)$
such that $n p = \mu$
\end{theorem}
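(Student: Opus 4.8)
The plan is to reduce the claim to a single ``balancing'' inequality and then iterate it by a standard majorization argument. Write a generic Poisson binomial with $n$ trials as the law of $S=B_1+\cdots+B_n$ with the $B_i\sim\mathrm{Bernoulli}(p_i)$ independent; the constraint $\sum_x x\,p(x)=\mu$ reads $p_1+\cdots+p_n=\mu$, and the symmetric choice $p_1=\cdots=p_n=\mu/n$ gives exactly $\mathcal B(n,\mu/n)$. The entropy $H(p_1,\dots,p_n):=H(S)$ is a symmetric, continuous function on the compact set $\{p\in[0,1]^n:\sum_i p_i=\mu\}$, so it suffices to show it is Schur-concave there: the uniform vector is majorized by every feasible vector, so Schur-concavity forces the maximum at $\mathcal B(n,\mu/n)$. (One may alternatively invoke the Schur-concavity of the entropy of a sum of independent Bernoullis, established by Shepp and Olkin.) Concretely I would prove the balancing lemma: fixing all but two parameters $p_1>p_2$ and replacing them by $p_1-s,\ p_2+s$ for $s\in[0,(p_1-p_2)/2]$ never decreases $H$.

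For the lemma, let $Y=B_3+\cdots+B_n$ be the (fixed) independent remainder and set $q_1(s)=p_1-s$, $q_2(s)=p_2+s$, so $q_1+q_2$ stays constant. If $P_{W(s)}$ denotes the pmf on $\{0,1,2\}$ of $W(s)=B_1(s)+B_2(s)$, an elementary computation gives $\tfrac{d}{ds}P_{W(s)}=m(s)\,(1,-2,1)$ with $m(s)=(p_1-p_2)-2s\ge0$. Convolving with the law $P_Y$ of $Y$, the pmf $P_{X(s)}$ of $X(s)=W(s)+Y$ obeys $\tfrac{d}{ds}P_{X(s)}=m(s)\big((1,-2,1)\ast P_Y\big)$, whose coordinates sum to zero, so
\[
\frac{d}{ds}H(X(s))=-m(s)\sum_k\big((1,-2,1)\ast P_Y\big)_k\log P_{X(s)}(k)=m(s)\sum_{j}P_Y(j)\,\log\frac{P_{X(s)}(j+1)^2}{P_{X(s)}(j)\,P_{X(s)}(j+2)}.
\]
Hence the lemma follows provided $P_{X(s)}$ is log-concave for every $s$ in the interval, since then each term on the right is nonnegative.

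The remaining, and I expect decisive, point is the log-concavity input. Every Bernoulli pmf $(1-q,q)$ is log-concave, and convolution of log-concave sequences on $\mathbb{Z}$ preserves log-concavity, so every Poisson binomial pmf is log-concave. Along the balancing path $q_1(s),q_2(s)\in[0,1]$, so $W(s)$ is a genuine two-trial Poisson binomial with $P_{W(s)}$ log-concave; $P_Y$ is log-concave as a Poisson binomial; hence $P_{X(s)}=P_{W(s)}\ast P_Y$ is log-concave. (Degenerate endpoints $q_i\in\{0,1\}$ are handled by continuity with $0\log0=0$; in the interior all masses involved are strictly positive on an interval, so no term blows up.) This establishes the balancing lemma. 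Finally, starting from any feasible $(p_1,\dots,p_n)$, repeatedly averaging the current largest and smallest coordinates strictly decreases $\sum_i(p_i-\mu/n)^2$ by $\tfrac12(\max-\min)^2$, hence converges to the uniform vector; each step does not decrease $H$ by the lemma, and $H$ is continuous, so $H(\mathcal B(n,\mu/n))$ is the maximum. The main obstacle is keeping $P_{X(s)}$ log-concave all along the path, which works precisely because the path never leaves the class of sums of genuine Bernoulli variables.
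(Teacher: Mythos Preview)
Your proof is correct and follows the same skeleton as the paper's: both establish Schur-concavity of $H(p_1,\dots,p_n)$ and reduce the decisive step to log-concavity of the Poisson binomial pmf, i.e.\ $\pi_{k}^n\,\pi_{k+2}^n\le(\pi_{k+1}^n)^2$. The paper applies the Schur--Ostrowski criterion directly, computing $(p_1-p_2)\bigl(\partial_{p_1}H-\partial_{p_2}H\bigr)$ and arriving at exactly your expression $\sum_j \pi_j^{n-2}\log\dfrac{\pi_{j+2}^n\,\pi_j^n}{(\pi_{j+1}^n)^2}$; it then deduces the required log-concavity from Newton's inequalities on elementary symmetric polynomials (writing $\pi_k^n$ as a constant times $c_k$ and invoking $c_{k-1}c_{k+1}<c_k^2$). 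Your route differs only in packaging: you integrate the same derivative along a balancing path and iterate to the uniform vector rather than quoting Schur--Ostrowski, and you obtain log-concavity from the stability of log-concave sequences under convolution rather than from Newton. Both arguments are standard and equally valid; the convolution argument is arguably more elementary, while Newton's inequalities give strict log-concavity (hence strict optimality unless all $p_i$ coincide) without extra work.
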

\begin{proof} 
See \ref{proof6}
\end{proof}

%%%%%%%%%%%%%%%%%%%%%%%%%%%%%%%%%%%%%%%%%%
%%% Multivariate and Correlated Binomials
%%%%%%%%%%%%%%%%%%%%%%%%%%%%%%%%%%%%%%%%%%
\subsection{Multivariate and Correlated Binomials} \label{multivariate_binomial}
Equipped with the intuition of the first section, we can see the profound connection between multivariate normal and multivariate binomial. 
We will define our multivariate binomial as the sum for $n$ independent trials of multivariate Bernoulli defined as before.

Let $X= (X_1, \ldots, X_k)$ be a k-dimensional random vector of 
possibly correlated binomial random variables that may have different parameters $n_i$ and $p_i$ and
let $x = (x_1,\ldots ,x_k)$ be a realization of $X$. The joint probability is given naturally by
\small
\begin{align} \label{mvbinomial:proba}
& \quad \mathbb{P}(X_1 = x_1, X_2 = x_2, \ldots, X_K = x_K)  \nonumber
\\
& = p(0, 0, \ldots,0)^{\prod_{j=1}^K (1-x_j)} \times p(1, 0, \ldots, 0)^{x_1 \prod_{j=2}^K(1-x_j)}  \nonumber
\\
&  \quad  \times  p(0, 1, \ldots, 0)^{(1-x_1)x_2 \prod_{j=3}^K(1-x_j)} \,\, \times \ldots \times \nonumber
\\
& \quad \times p(1, 1, \ldots, 1) ^{\prod_{j=1}^K x_j},
\end{align}
\normalsize

Like for the simple case of section \ref{prim:sec}, we can re-write this joint probability in the exponential form. 
Let us give some notations. 

Let  $T(X)$ be the vector $( X_1, ..., X_k,$ $ X_1 X_2, \ldots, X_1 \ldots X_k)^T$ of size $2^k-1$ whose elements represents all the possible $1$ to $k$ selection of $X_1, \ldots, X_k$. 
These  $1$ to $k$ selections of $X_1, \ldots, X_k$ are all the possible monomial polynomials of $X_1, \ldots, X_k$ of degree $1$ to $k$. By monomial, we mean that we can take only distinct power of  $X_1, \ldots, X_k$ with all of them having an exponent equal to 0 or 1. We also denote by $(i_1, \ldots, i_l)$ an ordered set of $1 \leq l \leq k$ elements of the integers from $1$ to $k$ and by $\Upsilon_{\{1,\ldots,k\}}$ the set of all the order sets $(i_1, \ldots, i_l)$ with $1 \leq l \leq k$ elements elements.  $\Upsilon_{\{1,\ldots,k\}}$ is also the sets of all possible non empty sets with integer elements in $\{1, \ldots, k\}$. Similarly,  $\Upsilon_{\{ i_1,\ldots, i_l \}}$ is the sets of all possible non empty set with elements in $\{ i_1,\ldots, i_l \}$. Finally, $\Upsilon_{\{ i_1,\ldots, i_l \}}^{even}$ (respectively  $\Upsilon_{\{ i_1,\ldots, i_l \}}^{odd}$) is the subset of $\Upsilon_{\{ i_1,\ldots, i_l \}}$ for set whose cardinality is even (respectively odd).

We are now able to provide the following proposition that gives the exponential form of the multi variate binomial mass probability function:

\begin{proposition}[Exponential form]\label{mvbinomial:parameters}
The multivariate Bernoulli model has a probability mass function of the exponential form given by
\begin{eqnarray}\label{mvbinomial:exponential_form}
 \mathbb{P}(X) = \exp( <\theta, T(X) > - A(\theta))
\end{eqnarray}
where the sufficient statistic $T(X) $ is $T(X)=(X_1, ..., X_k, X_1 X_2, \ldots, X_1 \ldots X_k)$, the log partition function $A(\theta)$ is $A(\theta)= -\log p(0, 0, \ldots,0)$ and the coefficients $\theta$ are given by:
\begin{eqnarray}
\theta_{i_1,\ldots, i_l} &=& \log \frac{ Even }{ Odd}, \label{mvbinomial:exponential_form2} \\
\text{with } Even \hspace{-0.2cm}&=&  \hspace{-0.7cm} \prod_{\substack{\{j_1, .., j_m\} \in \Upsilon_{\{ i_1,.., i_l \}}^{even}}} \hspace{-0.6cm} p\left(\substack{ 1 \text{ for  all } i_1, \ldots,i_l \\  \text{ but }  j_1, \ldots, j_m \text{ with }0 \\ \text{ rest with }0}\right) \label{mvbinomial:exponential_form3} \\
\text{and  } Odd \hspace{-0.2cm}&=&  \hspace{-0.7cm} \prod_{\substack{\{j_1, .., j_m\} \in \Upsilon_{\{ i_1,.., i_l \}}^{odd  }}} \hspace{-0.6cm} p\left(\substack{ 1 \text{ for  all } i_1, \ldots,i_l \\   \text{ but } j_1, \ldots, j_m \text{ with }0 \\ \text{ rest with }0}\right) \label{mvbinomial:exponential_form4}
\end{eqnarray}

Similarly, we can compute the regular probabilities from the canonical parameters as follows:
\begin{eqnarray}
p(\substack{ 1 \text{ for } i_1, \ldots, i_l \\ \text{ rest with }0}) = \frac{\exp( S^{i_1,\ldots, i_l}) }{D}. \label{mvbinomial:proba1}
\end{eqnarray}
where $S^{i_1,\ldots, i_l} $ is the sum of all the theta parameters indexed by any non empty selection within $\{ i_1,\ldots, i_l \}$:
\begin{eqnarray}
S^{i_1,\ldots, i_l} =\sum\limits_{\{ i_1, \ldots, i_m\} \in \Upsilon_{\{ i_1,\ldots, i_l \}}} \theta_{ i_1, \ldots, i_m} \label{mvbinomial:proba2}
\end{eqnarray}
with the convention for the empty set, $S=1$
and $D$ is the normalizing constant such that all the probabilities sum to 1:
\begin{eqnarray}
D =\sum_ {\substack{l=0, .. , k \\ 1 \leq i_1  \leq \ldots  \leq i_l }} \exp( S^{i_1, \ldots, i_l})  \label{mvbinomial:proba3}
\end{eqnarray}
with the convention for $l=0$ that $\exp( S^{i_1, \ldots, i_l})=1$.
\end{proposition}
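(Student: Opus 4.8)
The plan is to exploit the fact that every real-valued function on the discrete hypercube $\{0,1\}^k$ has a \emph{unique} representation as a multilinear polynomial in $x_1,\dots,x_k$, that is, as a linear combination of the $2^k$ monomials $\prod_{j\in S}x_j$ indexed by subsets $S\subseteq\{1,\dots,k\}$ (the empty set giving the constant term). I would first record the standing regularity assumption that every cell probability is strictly positive, writing $e_S\in\{0,1\}^k$ for the point whose $j$-th coordinate equals $1$ iff $j\in S$; this makes $\log\mathbb{P}(X=x)$ a well-defined function on the hypercube, so the multilinear expansion applies.

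Step one: rewrite \eqref{mvbinomial:proba}. Observe that the exponent attached to $p(e_S)$ in \eqref{mvbinomial:proba} is exactly $\chi_S(x):=\prod_{j\in S}x_j\prod_{j\notin S}(1-x_j)$, the indicator of $x=e_S$, so $\mathbb{P}(X=x)=\prod_{S}p(e_S)^{\chi_S(x)}$ and hence $\log\mathbb{P}(X=x)=\sum_{S}\chi_S(x)\log p(e_S)$. Since each $\chi_S$ is itself multilinear, expanding and collecting terms puts $\log\mathbb{P}(X=x)$ into the form $\sum_{S}\theta_S\prod_{j\in S}x_j$; evaluating at $x=0$ shows the constant term $\theta_\emptyset$ equals $\log p(0,\dots,0)$. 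Setting $A(\theta):=-\log p(0,\dots,0)=-\theta_\emptyset$ and letting $T(X)$ collect the nonconstant monomials yields \eqref{mvbinomial:exponential_form}, and $\sum_x\mathbb{P}(X=x)=1$ then identifies this $A$ with the log-partition function $\log\sum_x\exp(\langle\theta,T(x)\rangle)$.

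Step two: invert. Evaluating the multilinear identity at $x=e_S$ and using that $\prod_{j\in T}x_j$ vanishes at $e_S$ unless $T\subseteq S$ gives the zeta-transform relation $\log p(e_S)=\sum_{T\subseteq S}\theta_T$. Möbius inversion on the Boolean lattice (Möbius function $\mu(T,S)=(-1)^{|S\setminus T|}$) then gives $\theta_S=\sum_{T\subseteq S}(-1)^{|S\setminus T|}\log p(e_T)$; reindexing by $J=S\setminus T$ and grouping according to the parity of $|J|$ turns this into $\theta_S=\log\bigl(\prod_{J\subseteq S,\,|J|\ \mathrm{even}}p(e_{S\setminus J})\,/\,\prod_{J\subseteq S,\,|J|\ \mathrm{odd}}p(e_{S\setminus J})\bigr)$, which is precisely \eqref{mvbinomial:exponential_form2}--\eqref{mvbinomial:exponential_form4} once $p(e_{S\setminus J})$ is read as the cell probability displayed there. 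For the converse \eqref{mvbinomial:proba1}--\eqref{mvbinomial:proba3}, substitute $\theta_\emptyset=-A(\theta)$ into $\log p(e_S)=\sum_{T\subseteq S}\theta_T$ to get $p(e_S)=\exp\bigl(S^{i_1,\dots,i_l}\bigr)\exp(-A(\theta))$ with $S^{i_1,\dots,i_l}=\sum_{\emptyset\neq T\subseteq S}\theta_T$ as in \eqref{mvbinomial:proba2}; summing over all $S$ and using $\sum_S p(e_S)=1$ forces the normalizing constant $D=e^{A(\theta)}$ to equal $\sum_S\exp(S^{i_1,\dots,i_l})$, which is \eqref{mvbinomial:proba3}.

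I expect the real effort to be bookkeeping rather than mathematics: carefully matching the abstract Boolean-lattice Möbius inversion to the paper's explicit ``$\mathrm{Even}/\mathrm{Odd}$ product'' notation over $\Upsilon^{\mathrm{even}}_{\{i_1,\dots,i_l\}}$ and $\Upsilon^{\mathrm{odd}}_{\{i_1,\dots,i_l\}}$, and keeping the empty-set and parity conventions (the ``$S=1$'' and ``$\exp(S)=1$ for $l=0$'' stipulations) consistent throughout. A more pedestrian alternative would be induction on $k$, peeling off one coordinate at a time and invoking the bivariate computation underlying Proposition~\ref{prop1}; but the Möbius-inversion route is cleaner, exposes the zeta-transform/Möbius-transform duality, and makes it transparent why $A(\theta)=-\log p(0,\dots,0)$ falls out for free.
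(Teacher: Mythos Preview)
Your proposal is correct and in fact considerably more thorough than the paper's own argument. The paper's proof of this proposition consists of two sentences: it simply says that comparing \eqref{mvbinomial:proba} with \eqref{mvbinomial:exponential_form} ``by identification'' yields the stated $\theta$-formulas, and that the converse direction follows by noting the probabilities in \eqref{mvbinomial:proba1}--\eqref{mvbinomial:proba3} sum to one with $D=1/p(0,\dots,0)$. No mechanism is given for how the identification is carried out.

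Your route via the multilinear (Fourier--Walsh) expansion and M\"obius inversion on the Boolean lattice is a genuinely different and more structured argument. The key idea the paper leaves implicit---that $\log\mathbb{P}(X=x)=\sum_S\chi_S(x)\log p(e_S)$ and that expanding each indicator $\chi_S$ multilinearly amounts to a zeta transform whose inverse is the alternating-sign M\"obius sum---is exactly what you make explicit. The payoff is that the Even/Odd product in \eqref{mvbinomial:exponential_form2}--\eqref{mvbinomial:exponential_form4} is no longer an ansatz to be verified but is \emph{derived} as the parity-split of $\sum_{T\subseteq S}(-1)^{|S\setminus T|}\log p(e_T)$, and the inverse direction \eqref{mvbinomial:proba1}--\eqref{mvbinomial:proba3} is just the zeta transform read forwards. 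Your remark that the real work is bookkeeping (matching the M\"obius formula to the paper's $\Upsilon^{\mathrm{even}}/\Upsilon^{\mathrm{odd}}$ notation and handling the empty-set conventions) is accurate; the mathematics is complete as stated.
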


\begin{proof} 
See \ref{proof7}
\end{proof}

Last but not least, we can extend the result already found for the simple two dimension Bernoulli variable to the general multi dimensional Bernoulli concerning independence and correlation. Recall that one of the important statistical properties for the multivariate Gaussian distribution is the equivalence of independence and no correlation. This is a remarkable properties of the Gaussian (although more could be said about independent and Gaussian as explained for instance in \cite{Benhamou_remarkable}).

The independence of a random vector is determined by the separability of coordinates in its probability mass function. If we use the natural (or moment) parameter form of the probability mass function, this is not obvious. However, using the exponential form, the result is almost trivial and is given by the following proposition
\begin{proposition}[(Independence of Bernoulli outcomes)]\label{mvbinomial:independence}
The multivariate Bernoulli variable $X= (X_1, \ldots, X_k)$ is independent element-wise if and only if
\begin{equation}\label{mvbinomial:independence2}
\theta_{i_1, \ldots, i_l} = 0 \qquad \forall 1\leq i_1< \ldots< i_l \leq k, l \geq 2.
\end{equation}
\end{proposition}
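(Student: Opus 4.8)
The plan is to read off the statement from the exponential representation of Proposition~\ref{mvbinomial:parameters}, using the elementary fact that the monomials $\chi_S(x) := \prod_{i \in S} x_i$, indexed by the subsets $S \subseteq \{1,\ldots,k\}$ (with the convention $\chi_\emptyset \equiv 1$), are linearly independent as real-valued functions on the hypercube $\{0,1\}^k$, and in fact form a basis of that $2^k$-dimensional space. This is the discrete counterpart of inspecting the cross terms of the quadratic form appearing in the log-density of a multivariate Gaussian, and it will generalize the bivariate computation behind Proposition~\ref{prop2}.

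For the ``if'' direction, I would suppose $\theta_{i_1,\ldots,i_l} = 0$ for all $l \ge 2$. Then $\langle \theta, T(X)\rangle = \sum_{i=1}^k \theta_i X_i$ is separable, and the log-partition function factorizes,
\[
\exp(A(\theta)) = \sum_{x \in \{0,1\}^k} \exp\!\Big(\sum_{i=1}^k \theta_i x_i\Big) = \prod_{i=1}^k \bigl(1 + e^{\theta_i}\bigr),
\]
so that $\mathbb{P}(X = x) = \prod_{i=1}^k e^{\theta_i x_i}/(1+e^{\theta_i})$ is a product of univariate Bernoulli mass functions; element-wise independence is then immediate, and summing over the other coordinates identifies $\mathbb{P}(X_i = 1) = e^{\theta_i}/(1+e^{\theta_i})$.

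Conversely, assume $X$ is independent element-wise and, to keep $\log\mathbb{P}$ finite on all of $\{0,1\}^k$, that each $q_i := \mathbb{P}(X_i = 1)$ lies in $(0,1)$ (the degenerate cases being checked directly). Taking logarithms of $\mathbb{P}(X = x) = \prod_i q_i^{x_i}(1-q_i)^{1-x_i}$ yields
\[
\log \mathbb{P}(X = x) = \sum_{i=1}^k x_i \log\frac{q_i}{1-q_i} + \sum_{i=1}^k \log(1-q_i),
\]
an expansion in the monomial basis supported on $\chi_\emptyset$ and the degree-one terms only. Proposition~\ref{mvbinomial:parameters} supplies a second expansion, $\log \mathbb{P}(X = x) = \sum_{S \neq \emptyset} \theta_S\, \chi_S(x) - A(\theta)$; by linear independence the two expansions have the same coefficients, so matching the coefficient of $\chi_S$ for every $S$ with $|S| \ge 2$ forces $\theta_S = 0$, which is exactly \ref{mvbinomial:independence2}.

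I expect the only genuine difficulty to be notational rather than mathematical: carefully reconciling the subset indexing $S \leftrightarrow (i_1,\ldots,i_l)$ and the coefficient-matching argument with the explicit $Even/Odd$ product formula \ref{mvbinomial:exponential_form2} --- which is really Möbius inversion on the Boolean lattice. As a fallback for readers who prefer an explicit verification, I would note the direct route: substitute the factorized $p(\cdot)$ into \ref{mvbinomial:exponential_form2} and cancel the $Even$ and $Odd$ products term-by-term via an inclusion--exclusion identity on the subsets of $\{i_1,\ldots,i_l\}$. I would present the basis argument as the main proof and keep this explicit cancellation as a remark.
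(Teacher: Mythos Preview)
Your proposal is correct and follows the same route as the paper: use the exponential representation of Proposition~\ref{mvbinomial:parameters} and observe that element-wise independence is exactly separability of the log-mass function, which in turn is exactly the vanishing of all cross-coefficients $\theta_{i_1,\ldots,i_l}$ with $l\ge 2$. The paper states this in a single sentence, whereas you spell out the two directions and make explicit the linear-independence-of-monomials step that justifies coefficient matching; this is a welcome elaboration rather than a different argument.
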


\begin{proof} 
See \ref{proof8}
\end{proof}

\begin{rem}
The condition of equivalence between independence and no correlation can also be rewritten as
\begin{equation}
S^{i_1, \ldots, i_l} = \sum_{k=1}^l \theta_{i_k} \qquad\forall l \geq2.
\end{equation}
\end{rem}

\begin{rem}
A general multi variate binomial model implies $2^n-1$ parameters, which is way to many when $n$ is large. A simpler model is to impose that only the probabilities involving one state $X_i$ or two states $X_i, X_j$ are non zero. This is in fact the Ising model.
\end{rem}

%%%%%%%%%%%%%%%%%%%%%%%%%%%%%%%%%%%%%%%%%%%%%
%%%%%%%%%% ALGORITHM %%%%%%%%%%%%%%%%%%%%%%%%
%%%%%%%%%%%%%%%%%%%%%%%%%%%%%%%%%%%%%%%%%%%%%
\section{Algorithm}\label{sec:algorithm}
\subsubsection{CMA-ES estimation}
Another radically difference approach is to minimize some cost function depending on the Kalman filter parameters. As opposed to the maximum likelihood approach that tries to find the best suitable distribution that fits the data, this approach can somehow factor in some noise and directly target a cost function that is our final result. Because our model is an approximation of the reality, this noise introduction may leads to a better overall cost function but a worse distribution in terms of fit to the data. 

Let us first introduce the CMA-ES algorithm. Its name stands for covariance matrix adaptation evolution strategy. As it points out, it is an evolution strategy optimization method, meaning that it is a derivative free method that can accomodate non convex optimization problem. The terminology covariance matrix alludes to the fact that the exploration of new points is based on a multinomial distribution whose covariance matrix is progressively determined at each iteration. Hence the covariance matrix adapts in a sense to the sampling space, contracts in dimension that are useless and expands in dimension where natural gradient is steep. This algorithm has led to a large number of papers and articles and we refer to \cite{Hansen_2018}, \cite{Ollivier_2017}, \cite{Auger_2016}, \cite{Auger_2015}, \cite{Hansen_2014}, \cite{Auger_2012}, \cite{Hansen_2011}, \cite{Auger_2009}, \cite{Igel_2007}, \cite{Auger_2004} to cite a few of the numerous articles around CMA-ES. We also refer the reader to the excellent wikipedia page \cite{wiki:CMAES}.

In order to adapt CMA ES to discrete variables, we change in the algorithm the generation so Gaussian variables into the ones of multi variate binomials as follows:
$$
x_i \sim m + \mathcal{B}( \sigma^2 C ) \mod \text{dim}
$$
The corresponding algorithm is given in \ref{DiscreteCMAES}.

%\section{Experiments}\label{sec:experiments}

\section{Conclusion}
\label{sec:conclusion}
In this paper, we showed that using multi-variate correlated binomial distribution, we can derive an efficient adaptation of CMA-ES for discrete variable optimization problem using correlated binomials. We have proved that correlated binomials share some similarities with normal distribution in terms of independence and correlation equivalence as well as rich information for correlation structure. In order to avoid too many parameters, we impose that only single state and bi-state probabilities are not null. In the future, we hope to develop additional variations around this CMA-ES version for the combination of discrete and continuous variables mixing potentially multivariate binomial and normal distributions.
\bibliographystyle{plain}
\bibliography{mybib}

\begin{thebibliography}{10}

\bibitem{Auger_2015}
Youhei Akimoto, Anne Auger, and Nikolaus Hansen.
\newblock Continuous optimization and {CMA-ES}.
\newblock In {\em Genetic and Evolutionary Computation Conference, {GECCO}
  2015, Madrid, Spain, July 11-15, 2015, Companion Material Proceedings}, pages
  313--344, 2015.

\bibitem{Auger_2016}
Youhei Akimoto, Anne Auger, and Nikolaus Hansen.
\newblock {CMA-ES} and advanced adaptation mechanisms.
\newblock In {\em Genetic and Evolutionary Computation Conference, {GECCO}
  2016, Denver, CO, USA, July 20-24, 2016, Companion Material Proceedings},
  pages 533--562, 2016.

\bibitem{Auger_2009}
Anne Auger and Nikolaus Hansen.
\newblock Benchmarking the {(1+1)-CMA-ES} on the {BBOB-2009} noisy testbed.
\newblock In {\em Genetic and Evolutionary Computation Conference, {GECCO}
  2009, Proceedings, Montreal, Qu{\'{e}}bec, Canada, July 8-12, 2009, Companion
  Material}, pages 2467--2472, 2009.

\bibitem{Auger_2012}
Anne Auger and Nikolaus Hansen.
\newblock Tutorial {CMA-ES:} evolution strategies and covariance matrix
  adaptation.
\newblock In {\em Genetic and Evolutionary Computation Conference, {GECCO} '12,
  Philadelphia, PA, USA, July 7-11, 2012, Companion Material Proceedings},
  pages 827--848, 2012.

\bibitem{Auger_2004}
Anne Auger, Marc Schoenauer, and Nicolas Vanhaecke.
\newblock {LS-CMA-ES:} {A} second-order algorithm for covariance matrix
  adaptation.
\newblock In {\em Parallel Problem Solving from Nature - {PPSN} VIII, 8th
  International Conference, Birmingham, UK, September 18-22, 2004,
  Proceedings}, pages 182--191, 2004.

\bibitem{Benhamou_remarkable}
Eric {Benhamou}, Beatrice {Guez}, and Nicolas {Paris}.
\newblock {Three remarkable properties of the Normal distribution}.
\newblock {\em arXiv e-prints}, October 2018.

\bibitem{Cover_2006}
T.~M. Cover and J.~A. Thomas.
\newblock {\em Elements of Information Theory (Wiley Series in
  Telecommunications and Signal Processing)}.
\newblock Wiley-Interscience, New York, NY, USA, 2006.

\bibitem{Hansen_2011}
Nikolaus Hansen and Anne Auger.
\newblock {CMA-ES:} evolution strategies and covariance matrix adaptation.
\newblock In {\em 13th Annual Genetic and Evolutionary Computation Conference,
  {GECCO} 2011, Companion Material Proceedings, Dublin, Ireland, July 12-16,
  2011}, pages 991--1010, 2011.

\bibitem{Hansen_2014}
Nikolaus Hansen and Anne Auger.
\newblock Evolution strategies and {CMA-ES} (covariance matrix adaptation).
\newblock In {\em Genetic and Evolutionary Computation Conference, {GECCO} '14,
  Vancouver, BC, Canada, July 12-16, 2014, Companion Material Proceedings},
  pages 513--534, 2014.

\bibitem{Hansen_2003}
Nikolaus Hansen, Sibylle~D. M\"{u}ller, and Petros Koumoutsakos.
\newblock Reducing the time complexity of the derandomized evolution strategy
  with covariance matrix adaptation (cma-es).
\newblock {\em Evol. Comput.}, 11(1):1--18, March 2003.

\bibitem{Hansen_2001}
Nikolaus Hansen and Andreas Ostermeier.
\newblock Completely derandomized self-adaptation in evolution strategies.
\newblock {\em Evol. Comput.}, 9(2):159--195, June 2001.

\bibitem{Hardy_1988}
G.H. Hardy, J.E. Littlewood, and G.~P{\'o}lya.
\newblock {\em Inequalities}.
\newblock Cambridge Mathematical Library. Cambridge University Press, 1988.

\bibitem{Harremoes_2001}
P.~Harremoes.
\newblock Binomial and poisson distributions as maximum entropy distributions.
\newblock {\em IEEE Transactions on Information Theory}, 47(5):2039--2041,
  2001.

\bibitem{Igel_2007}
Christian Igel, Nikolaus Hansen, and Stefan Roth.
\newblock Covariance matrix adaptation for multi-objective optimization.
\newblock {\em Evol. Comput.}, 15(1):1--28, March 2007.

\bibitem{Ma_2010}
X.~Ma.
\newblock Penalized regression in reproducing kernel hilbert spaces with
  randomized covariate data. technical report no. 1159. dept.
\newblock {\em Statistics, Univ}, 5370, 2010.

\bibitem{McCullagh_1989}
P.~McCullagh and J.~Nelder.
\newblock {\em Generalized Linear Models}.
\newblock {Chapman Hall}, New York, 1989.

\bibitem{Ollivier_2017}
Yann Ollivier, Ludovic Arnold, Anne Auger, and Nikolaus Hansen.
\newblock Information-geometric optimization algorithms: A unifying picture via
  invariance principles.
\newblock {\em J. Mach. Learn. Res.}, 18(1):564--628, January 2017.

\bibitem{Pecaric_1992}
J.~E. Pecaric, F.~Proschan, and Y.~L. Tong.
\newblock {\em Convex functions, partial orderings, and statistical
  applications}.
\newblock Boston: Academic Press., 1992.

\bibitem{Hansen_2018}
Konstantinos Varelas, Anne Auger, Dimo Brockhoff, Nikolaus Hansen, Ouassim~Ait
  ElHara, Yann Semet, Rami Kassab, and Fr{\'{e}}d{\'{e}}ric Barbaresco.
\newblock A comparative study of large-scale variants of {CMA-ES}.
\newblock In {\em Parallel Problem Solving from Nature - {PPSN} {XV} - 15th
  International Conference, Coimbra, Portugal, September 8-12, 2018,
  Proceedings, Part {I}}, pages 3--15, 2018.

\bibitem{Whittaker_1990}
J.~Whittaker.
\newblock {\em Graphical Models in Applied Mathematical Multivariate
  Statistics}.
\newblock Wiley, New York, 1990.

\bibitem{wiki:CMAES}
Wikipedia.
\newblock Cma-es, 2018.

\end{thebibliography}

\clearpage

\onecolumn

\appendix
%%%%%%%%%%%%%%%%
%%%%%    Proofs      %%%%%
%%%%%%%%%%%%%%%%
\section{Proofs}

%%%%%%%%%%%%
\subsection{Proof of proposition \ref{prop1}}\label{proof1}
\begin{proof} We can trivially infer all the moment parameters from equations \ref{theta1}, \ref{theta2} and \ref{theta3}.
\end{proof}

%%%%%%%%%%%%
\subsection{Proof of proposition \ref{prop2}}\label{proof2}
\begin{proof} 
The exponential family formulation of the bivariate Bernoulli distribution shows that a necessary and sufficient condition for the distribution to seperable 
into two components with each only depending on $x_1$ and $x_2$ respectively is that $\theta_{12}=0$. This proves the first assertion of proposition \ref{prop2}.

Proving equivalence between correlation and independence is the same as proving equivalence between covariance and independence. 
The covariance between $X_1$ and $X_2$ is easy to calculate and given by 
\begin{eqnarray}
\operatorname{Cov}(X_1, X_2) &= & \mathbb{E}\left[ X_1 X_2 \right] - \mathbb{E}\left[ X_1 \right] \mathbb{E}\left[ X_2 \right]  \\ 
& =& K e^{\theta_0 + \theta_1 + \theta_{12}} - K  (e^{\theta_0} + e^{\theta_0 + \theta_1 + \theta_{12}}) )   K (  e^{\theta_1} + e^{\theta_0 + \theta_1 + \theta_{12}} ) \\
&= & K  e^{\theta_0 + \theta_1 + \theta_{12}} (1 - K e^{\theta_0} - K e^{\theta_1} - K e^{\theta_0 + \theta_1 + \theta_{12}}) - K^2 e^{\theta_0 + \theta_1} \\ \label{sum_to_one}
& = & K ^2  e^{\theta_0 + \theta_1} (e^{\theta_{12}} - 1 ) 
\end{eqnarray}
where in equation \ref{sum_to_one}, we have used that the four probabilities sum to one.
Hence, the correlation or the covariance is null for non trivial probabilities if and only if $\theta_{12} = 0$, which is equivalent to the independence.
\end{proof}

%%%%%%%%%%%%%%%%%%%%%%%%
\subsection{Proof of proposition \ref{prop3}}\label{proof3}
\begin{proof} 
For the coordinate $X_1$, it is trivial to see that 
\begin{eqnarray*}
\mathbb{P}(X_1 = 1)  &=& P(X_1 = 1, X_2 = 0) + P(X_1 = 1, X_2 = 1) = p_{10} + p_{11}, \\
\mathbb{P}(X_1 = 0) &= & p_{00} + p_{01}, \\
\mathbb{P}(X_1 = 1) &+& P(X_1 = 0) = 1.
\end{eqnarray*}
which shows that $X_1$ follows the univariate Bernoulli distribution with density given by equation \eqref{marginalpdf}.
Likewise, it is trivial to see that
\begin{eqnarray*}
\mathbb{P}(X_1 = 0 | X_2 = 0) &=& \frac{P(X_1 = 0, X_2 = 0)}{P(X_2 = 0)} =  \frac{p_{00}}{p_{00} + p_{10}}, \\
\mathbb{P}(X_1 = 1 | X_2 = 0) &=& \frac{p_{10}}{p_{00} + p_{10}}, \\
\mathbb{P}(X_1 = 1 | X_2 = 0) &+ &P(X_1 = 0 | X_2 = 0) = 1. 
\end{eqnarray*}
Similar results apply for the condition $X_2 = 1$, which shows the second result and concludes the proof.
\end{proof}

%%%%%%%%%%%%%%%%%%%%%%%%%%%
\subsection{Proof of proposition \ref{prop4}}\label{proof4}
\begin{proof} 
Let us write the limit for the binomial distribution when number of trials $n\to\infty$, and probability of success in trial $p \to 0$ but $np \to \lambda$ remains finite. We have for a given $k$

\begin{eqnarray}
&& \lim\limits_{n\to\infty}\displaystyle\binom{n}{k}\left(\dfrac{\lambda}{n}\right)^k\left(1-\dfrac{\lambda}{n}\right)^{n-k}\\ &&
=\dfrac{\lambda^k}{k!}\cdot\lim\limits_{n\to\infty}\left[1\left(1-\dfrac1n\right)\left(1-\dfrac2n\right)\ldots\left(1-\dfrac{k-1}{n}\right)\right]\cdot\lim\limits_{n\to\infty}\left(1-\dfrac{\lambda}{n}\right)^n\cdot\dfrac{1}{\lim\limits_{n\to\infty}\left(1-\dfrac{\lambda}{n}\right)^k} \\ & &
=\lim\limits_{n\to\infty}\left(1-\dfrac{\lambda}{n}\right)^n\cdot\dfrac{\lambda^k}{k!}\cdot\lim\limits_{n\to\infty}\left[1\left(1-\dfrac1n\right)\left(1-\dfrac2n\right)\ldots\left(1-\dfrac{k-1}{n}\right)\right]\cdot\dfrac{1}{\lim\limits_{n\to\infty}\left(1-\dfrac{\lambda}{n}\right)^k} \\ & &
=\dfrac{e^{-\lambda}\lambda^k}{k!}.
\end{eqnarray}
which proves that the binomial converges to the Poisson distribution.
\end{proof}

%%%%%%%%%%%%%%%%%%%%%%%%%
\subsection{Proof of theorem \ref{theo2}}\label{proof5}
\begin{proof}
We follow the proof of Theorem 11.1.1 of \cite{Cover_2006}. If we write the Lagrangian $\mathcal{L}(p,\theta, \theta_0, \lambda)$  for the problem 
\begin{eqnarray*}
& \text{maximize} \quad - \int p(x) \log p(x) d\mu(x) \\ 
& \text{subject to} \quad  \mathbb{E}_P \left[ \phi(X) \right] = \alpha    \quad \text{and } \quad \int p(x) d\mu(x) = 1  \quad \text{and } \quad p(x) \geq 0 
\end{eqnarray*}
where the Lagrange multipliers $(\theta, \theta_0, \lambda)$ are for the three constraints, we have
\begin{eqnarray*}
\mathcal{L}(p,\theta, \theta_0, \lambda) = \int  p(x) \log p(x) d\mu(x)  + \sum_{i=1}^d \theta_i \left( \alpha_i -  \int  p(x) \phi_i(x) d\mu(x)  \right) \\
+ \theta_0 \left( \int p(x) d\mu(x) - 1 \right) - \int   \lambda(x) p(x)    d\mu(x)
\end{eqnarray*}
and noticing that the function to optimize is convex and satisfies the Slater’s constraint, we can use Lagrange duality to characterize the solution as the solution of the critical point given by
\begin{eqnarray}
1 +\log p(x) - < \theta, \phi(x)> + \theta_0 - \lambda(x) = 0
\end{eqnarray}
or equivalently, 
\begin{eqnarray}
p(x) = \exp(  < \theta, \phi(x)> -1 - \theta_0 + \lambda(x) )
\end{eqnarray}
As this solution always satisfies the condition $p(x) > 0$, we have necessarily that the Lagragian multiplier related to the constraint  $p(x) > 0$ should be null: $\lambda(x) = 0$.  The solution should be a probability distribution, which implies that 
$$
\int p(x)  d\mu(x) = 1
$$
which imposes that
\begin{eqnarray*}
\int  \exp(  -1 - \theta_0  ) d\mu(x) = \int \exp(  < \theta, \phi(x)> ) d\mu(x)
\end{eqnarray*}

or equivalently, writing in the exponential form $\theta_0 -1 = A(\theta) =  \log \int \exp(  < \theta, \phi(x)> )d\mu(x)$, 
we have that $p$ satisfies
\begin{eqnarray}
p_{\theta}(x) = \exp(  < \theta, \phi(x) - A(\theta) >)
\end{eqnarray}
which shows that the distribution is part of the exponential family.  

To prove its uniqueness, we use the fact that the Shannon entropy is related to the Kullback Leibler divergence $D_{kl}$ as follows:
\begin{eqnarray}
H(P) =  - \int p(x) \log p(x) d\mu(x)  = - D_{kl}( P \| P_{\theta_0}) - H(P_{\theta_0})
\end{eqnarray}
which concludes the proof as the  Kullback Leibler divergence $ D_{kl}( P \| P_{\theta_0}) > 0$ unless $P= P_{\theta_0}$
\end{proof}

%%%%%%%%%%%%%%%%%%%%%
\subsection{Proof of theorem \ref{theo3}}\label{proof6}
\begin{proof}
We will prove a stronger result that the entropy $H(p_1, \ldots, p_n)$ of a generalized binomial distribution with parameters $n, p_1, \ldots, p_n$ is Schur concave (see \cite{Pecaric_1992} for a definition and some properties). A straight consequence of Schur concavity is that the function is maximum for the constant function as follows:
\begin{eqnarray}
H(p_1, \ldots, p_n) \leq H(\bar{p}, \ldots, \bar{p})
\end{eqnarray}
with $\bar{p} = \frac{\sum_{i=1}^n p_i}{n}$. This will prove that the regular binomial distribution satisfies the maximum entropy principle.

Our proof of the Schur concavity uses the same trick as in \cite{Harremoes_2001}, namely the usage of elementary symmetric functions. 
Let us denote by $(X_i)_{i=1,\ldots,n}$ the independent Bernoulli variables with parameter $p_i$ and their sum $S_n= \sum_{i=1}^n X_i$ the variable for the canonical Poisson binomial variable. Its probability mass function writes as
\begin{equation}
\pi_k^n \widehat{=} \Pr(S_n=k) = \sum\limits_{A\in F_k} \prod\limits_{i\in A} p_i \prod\limits_{j\in A^c} (1-p_j) 
\end{equation}

where $F_k$ is the set of all subsets of $k$ integers selected from $\{1,2,3,...,n\}$. The entropy $H(p_1, \ldots, p_n)$ is permutation symmetric, hence to prove Schur concavity, it suffices to show that the cross term $(p_1 - p_2) (\frac{\partial H}{p_1} - \frac{\partial H}{p_2})$ is negative. Let us compute

\begin{equation}\label{proof6:eq1}
\frac{\partial H}{p_1} - \frac{\partial H}{p_2} = - \sum_{k=0}^n (1+ \log \pi_k^n) (\frac{\partial \pi_k^n}{p_1} - \frac{\partial \pi_k^n}{p_2})
\end{equation}

We can notice that for $k \geq 2$ and $k \leq n-2$
\begin{equation}\label{proof6:eq2}
\pi_k^n = p_1 p_2 \pi_{k-2}^{n-2} + (p_1 (1-p_2) + (1-p_1) p_2)  \pi_{k-1}^{n-2} +  (1-p_1) (1-p_2)   \pi_{k}^{n-2}
\end{equation}
where $ \pi_{j}^{n-2} = \pi_{j}^{n-2}( p_3, \ldots, p_n)$. The equation \eqref{proof6:eq2} can be extended for $k=0, 1$ or $k=n-1, n$ with the convention that $\pi_{j}^{n}= 0$ for $j<0$ and $\pi_{j}^{n}= 0$ for $j>n$. Hence equation \eqref{proof6:eq2} is valid for any $k$. Deriving equation \eqref{proof6:eq2} with respect to $p_i$  leads to

\begin{equation}\label{proof6:eq3}
\frac{\partial \pi_k^n}{p_1} - \frac{\partial \pi_k^n}{p_2} = - (p_1-p_2) (  \pi_{k-2}^{n-2} - 2  \pi_{k-1}^{n-2} +  \pi_{k}^{n-2} )
\end{equation}

Combining equations \eqref{proof6:eq1} and \eqref{proof6:eq3}, we have
\begin{eqnarray}\label{proof6:eq4}
(p_1 - p_2) (\frac{\partial H}{p_1} - \frac{\partial H}{p_2}) &= &(p_1 - p_2)^2  \sum_{k=0}^n (1+ \log \pi_k^n) (\pi_{k-2}^{n-2} - 2  \pi_{k-1}^{n-2} +  \pi_{k}^{n-2}) \nonumber \\
& = & (p_1 - p_2)^2  \sum_{k=0}^{n} (\log \pi_k^n) (\pi_{k-2}^{n-2} - 2  \pi_{k-1}^{n-2} +  \pi_{k}^{n-2}) \nonumber \\
& = & (p_1 - p_2)^2  \sum_{k=0}^{n-2} \pi_{k}^{n-2} \log  \frac{ \pi_{k+2}^n \pi_{k}^n }{ (\pi_{k+1}^n)^2} \label{proof6:eq5}
\end{eqnarray}

Recall a result that is allegedly attributed to Newton about elementary symmetric functions. Denote the product
\begin{eqnarray}
(x+a_1)(x+a_2)\ldots (x+a_n) &=& x^n+ c_1 x^{n-1}+ c_2 x^{n-2}+\ldots +c_n
\end{eqnarray}
with $c_k$ the k\textsuperscript{th} elementary function of the a's. We have
\begin{equation}\label{proof6:newton}
c_{k-1} c_{k+1} < c_k^2
\end{equation}
unless all a are equal (see for instance \cite{Hardy_1988} theorem 51 page 52 section 2.22). Let us take the function
\begin{eqnarray}
\prod_{i=1}^n  (1-p_i) (x + \frac{p_1}{1-p_1})(x + \frac{p_2}{1-p_2})\ldots (x+ \frac{p_n}{1-p_n}) &=& \prod_{i=1}^n  (1-p_i) (x^n+ c_1 x^{n-1}+ c_2 x^{n-2}+\ldots +c_n) \\
&=& x^n + \pi_1^n x^{n-1}+ \pi_2^n x^{n-2}+ \ldots + \pi_n^n
\end{eqnarray}
we have therefore that 
\begin{eqnarray}\label{proof6:relationship}
\pi_k^n = \prod_{i=1}^n  (1-p_i) c_{k}
\end{eqnarray}
Combining equations \eqref{proof6:newton} and \eqref{proof6:relationship}, we proved that
\begin{eqnarray}
\pi_{k+2}^n \pi_{k}^n < (\pi_{k+1}^n)^2 
\end{eqnarray}
which concludes the proof
\end{proof}

%%%%%%%%%%%%%%%%%%%%%
\subsection{Proof of proposition\ref{mvbinomial:parameters}}\label{proof7}
\begin{proof}
Comparing the equations \eqref{mvbinomial:proba} and \eqref{mvbinomial:exponential_form}, and using the provided sufficient statistics given in  proposition\ref{mvbinomial:parameters}, we see by identification that the parameters $\theta$ should be given by the equation \eqref{mvbinomial:exponential_form2} with the terms with a plus sign given by equation \eqref{mvbinomial:exponential_form3} and the terms with a negative sign given by equation \eqref{mvbinomial:exponential_form4}

Similarly, if we take equation \eqref{mvbinomial:proba1},  \eqref{mvbinomial:proba2} and \eqref{mvbinomial:proba3}, we can notice that the probabilities given sum to one, that $D= 1 / p_{0, \ldots, 0}$ and that from the expression giving the theta's \eqref{mvbinomial:exponential_form2}, we back out the probabilities. This concludes the proof.
\end{proof}

\subsection{Proof of proposition\ref{mvbinomial:independence}}\label{proof8}
\begin{proof}
The proof of proposition \ref{mvbinomial:independence} is immediate using the exponential form as the independence is equivalent to the separability which is equivalent to equation \eqref{mvbinomial:independence2}
\end{proof}

%%%%%%%%%%%%%%%%%%%
\section{Pseudo code}
\begin{algorithm}[H]
\caption{CMA ES algorithm} \label{DiscreteCMAES}
	\begin{algorithmic} 
	\State \textbf{Set} $\lambda$								\Comment{number of samples /  iteration}
	\State Initialize $m, \sigma, C=\mathbf{I}_n, p_\sigma=0$, $p_c=0$  \Comment{initialize state variables}
	\\
	\While{(not terminated)}  								
     		\For{$i = 1 \text{ to } \lambda$} 					\Comment{samples $\lambda$ new solutions and evaluate them}
        		\State $x_i \sim m + \mathcal{B}( \sigma^2 C ) \mod \text{dim}$ 	\Comment{samples multivariate correlated Bernoulli}
	        	\State $f_i =f(x_i)$								\Comment{evaluates}
		\EndFor
		\\
	      	\State $x_{1...\lambda} = $  $x_{s(1)...s(\lambda)}$ with $s(i)$ = argsort($f_{1...\lambda}$, $i$)  \Comment{reorders samples}
     		\State $m' = m$  									\Comment{stores current value of $m$}
		\\
	     	\State $m = $ update mean$(x_1, ... ,$ $x_\lambda)$  \Comment{udpates mean to better solutions}
     		\State $p_\sigma = $ update ps$(p_\sigma,$ $\sigma^{-1} C^{-1/2} (m - m'))$  \Comment{updates isotropic evolution path}
	     	\State $p_c = $ update pc$(p_c,$ $\sigma^{-1}(m - m'),$ $||p_\sigma||)$  \Comment{updates anisotropic evolution path}
	     	\State $C = $ update C$(C,$ $p_c,$ ${(x_1 - m')}/{\sigma},... ,$ ${(x_\lambda - m')}/{\sigma})$  \Comment{updates covariance matrix}
	     	\State $\sigma = $ update sigma$(\sigma,$ $||p_\sigma||)$  \Comment{updates step-size using isotropic path length}
		\\
		\State not terminated $=$ iteration $\leq$ iteration max and $||m-m'|| \geq \varepsilon$ \Comment{stop condition}
	\EndWhile
	\\ \\
	\Return $m$ or $x_1$			\Comment{returns solution}
	\end{algorithmic}
\end{algorithm}

\end{document}